\DeclareMathOperator*{\argmin}{arg\,min}
\newcommand{\R}{\mathbb{R}}
\newcommand{\E}{\mathbb{E}}
\newcommand{\pa}{\text{pa}}
\newcommand{\h}{\textbf{h}}
\newcommand{\x}{\textbf{x}}
\newtheorem{lemma}{Lemma}
\title{Score matching enables causal discovery of nonlinear additive noise models
}
\author{
  Paul Rolland \\
  EPFL \\
  Lausanne, Switzerland \\
  \texttt{paul.rolland@epfl.ch} \\
  \And
  Volkan Cevher \\
  EPFL \\
  Lausanne, Switzerland \\
   \And
  Matth\"aus Kleindessner \\
  Amazon Web Services\\
  T\"ubingen, Germany\\
  \And
  Chris Russel \\
  Amazon Web Services \\ 
  T\"ubingen, Germany\\
  \AND
  Bernhard Sch\"olkopf \\
  Amazon Web Services \\ 
  T\"ubingen, Germany\\
  \And
  Dominik Janzing \\
  Amazon Web Services \\ 
  T\"ubingen, Germany\\
  \And
  Francesco Locatello \\
  Amazon Web Services \\ 
  T\"ubingen, Germany\\
}
\begin{document}
\maketitle

\begin{abstract}
This paper demonstrates how to recover causal graphs from the score of the data distribution in non-linear additive (Gaussian) noise models. Using score matching algorithms as a building block, we show how to design a new generation of scalable causal discovery methods. 
To showcase our approach, we also propose a new efficient method for approximating the score's Jacobian, enabling to recover the causal graph. Empirically, we find that the new algorithm, called SCORE, is competitive with state-of-the-art causal discovery methods while being significantly faster.
\end{abstract}

\keywords{Causal discovery \and Score matching}

\section{Introduction}


In this work, we focus on causal discovery from purely observational data, i.e., finding a causal Directed Acyclic Graph (DAG) underlying a given data set. This problem is at the core of causality, since knowledge of the causal graph support the prediction of the effect of interventions ~\cite{peters2017elements,scholkopf2021toward}.


In general, the problem of causal discovery from observational data is ill-posed: there may be several generative models with various causal structures that can produce the same data distribution. Therefore, in order to make the problem well-posed, we need to rely on extra assumptions on the generative process. A popular solution is to assume that the noise injected during the generation of each variable is additive (see equation~\eqref{eq:3.1}). Under additional assumptions on the link functions, it has been shown that such model is identifiable from purely observational data~\cite{peters2014causal}.

\looseness=-1Many causal discovery algorithms maximize a suitable loss function over the set of (DAGs). Unfortunately, solving such problem using classical loss functions is known to be NP-hard~\cite{chickering1996learning}. Therefore, recent methods focused on heuristic approximations, e.g., by using a greedy approach (PC, FCI~\cite{spirtes2000causation, zhang2008completeness}, GES~\cite{chickering2002optimal}, CAM~\cite{buhlmann2014cam} and others~\cite{teyssier2012ordering, larranaga1996learning, singh1993algorithm, cooper1992bayesian, bouckaert1992optimizing}), by expressing the problem as a continuous non-convex optimization problem and applying first-order optimization methods (GraNDAG ~\cite{lachapelle2019gradient}, NOTEARS ~\cite{zheng2018dags}), or by using Reinforcement Learning methods (RL-BIC~\cite{zhu2019causal}, CORL~\cite{wang2021ordering}). 

\looseness=-1There are two distinct aspects that make the search over DAGs difficult: the size of the set of DAGs, which grows super-exponentially with the number of nodes, and the acyclicity constraint. 
In order to reduce the impact of these two difficulties, approaches called order-based methods~\cite{teyssier2012ordering} tackle the problem in two phases. First, they find a certain \textit{topological ordering} of the nodes, such that a node in the ordering can be a parent only of the nodes appearing after it in the same ordering. Second, the graph is constructed respecting the topological ordering and pruning spurious edges, e.g., using sparse regression~\cite{buhlmann2014cam}. While the first step still requires to solve a combinatorial problem, the set of permutations is much smaller than the set of DAGs. Moreover, once a topological order is fixed, the acyclicity constraint is naturally enforced, making the pruning step easier to solve.

The method that we propose is an order-based one, where the topological order is estimated based on an approximation of the \textit{score} of the data distribution. The score of a distribution with a differentiable probability density $p(x)$ is defined as the map $\nabla \log p(x)$.\footnote{The term \textit{score} has been used in the causality literature with a different meaning. Classical works~\cite{chickering2002optimal} use this term referring to the objective of an optimization problem yielding the causal structure as solution. In this paper, the term score means $\nabla \log p(x)$ as in the statistics literature~\cite{wilks1962mathematical}.} 
We show that for a non-linear additive Gaussian noise model, it is possible to identify leaves of the causal graph by analysing its entailed observational score. By sequentially identifying the leaves of the causal graph, and removing the identified leaf variables, one can obtain a complete topological order with a time complexity \textit{linear in the number of nodes}. Classical pruning techniques can then be used in order to obtain the final graph. While the proposed algorithm is designed for additive Gaussian noise models, we show that the main required ingredient for our method to work is the additive structure of the model, rather than the noise type. Hence, we expect similar methods to also be applicable to other types of noise (i.e, non-Gaussian). Closest to our work is LISTEN~\cite{ghoshal2018learning} which can be derived as a special case of our framework in the linear setting as described in the related work section. 

In order to approximate the score of the data distribution from a sample, we exploit and extend recent work on score matching and density gradient estimation~\cite{li2017gradient}. Score approximation methods from observational data have shown success in general machine learning tasks such as generative~\cite{song2019generative} and discriminative models~\cite{zimmermann2021score}, leading to increased interest in developing scalable and efficient solutions. In particular, score-based generative models have shown state-of-the-art performance for image generation~\cite{song2019generative, song2020score, song2020sliced, song2020improved}. As much of the prior work on causal discovery approaches has focused on leveraging machine/deep learning ~\cite{lachapelle2019gradient,zheng2018dags,zhu2019causal,wang2021ordering} to provide a tractable approximation to an NP-hard problem, our work is especially relevant to bridge the gap between provably identifying the causal structure and leveraging advances in deep generative models to scale to large sample sizes and high dimensions.


Hereafter, we summarize our contributions:
\vspace{-2mm}
\begin{itemize}
    \item We start by showing that, in the case of non-linear additive Gaussian noise model, knowing the distribution's score function is sufficient to recover the full causal graph, and we provide a method for doing so. Our approach enjoys a linear complexity in the number of nodes to identify the topological order and introduces a new way of learning causal structure from observational data. To the best of our knowledge, the link between the score function and the causal graph structure established in Lemmata~\ref{lem:1} and~\ref{lem:2} is not only useful, but also novel.
    
    \item We propose a new method for estimating the score's Jacobian over a set of observations, exploiting and extending an existing method based on Stein's identity, which can be of independent interest. This method is then used to design a practical algorithm for estimating the causal topological order.
    
    \item We finally evaluate our proposed algorithm on both synthetic and real world data and show competitive results compared to state-of-the-art methods, while being significantly faster ($10\times$ faster than CAM~\cite{buhlmann2014cam} on 20 nodes graphs and $5\times$ faster than GraN-DAG~\cite{lachapelle2019gradient} on 50 nodes). We also show that our method is robust to noise misspecification and works well when the additive noise is non-Gaussian.
\end{itemize}

\section{Preliminaries}

\subsection{Causal discovery for non-linear additive Gaussian noise models}

Assume that a random variable $X \in \R^d$ is generated using the following model:
\begin{equation} \label{eq:3.1}
    X_i = f_i(\pa_i(X)) + \epsilon_i,
\end{equation}
$i=1, \ldots, d$, where $\pa_i(X)$ selects the coordinates of $X$ which are parents of node $i$ in some DAG. The noise variables $\epsilon_i \sim \mathcal{N}(0, \sigma_i^2)$ 
are jointly independent. The functions $f_j$ are assumed to be twice continuously differentiable and non-linear in every component. That is, if we denote the parents $\pa_j(X)$ of $X_j$ by $X_{k_1}, X_{k_2}, \ldots, X_{k_l}$, then, for all $a=1, \ldots, l$, the function $f_j(x_{k_1}, \ldots, x_{k_{a-1}}, \cdot, x_{k_{a+1}}, \ldots, x_{k_l})$ is assumed to be nonlinear for some $x_{k_1}, \ldots, x_{k_{a-1}}, x_{k_{a+1}}, \ldots, x_{k_l} \in \mathbb{R}^{l-1}$.

This model is known to be identifiable from observational data \cite{peters2014causal}, meaning that it is possible to recover the DAG underlying the generative model~\eqref{eq:3.1} from the knowledge of the joint probability distribution of $X$. In the present work, we aim to identify the causal graph from the score function $\nabla \log p(x)$, which has a one-to-one correspondence with $p(x)$. Hence, any model identifiable from observational data will be identifiable from the knowledge of the data score function.

\subsection{Score matching}

The goal of score matching is to learn the score function $s(x) \equiv
\nabla \log p(x)$ of a distribution with density $p(x)$ given an i.i.d. sample $\{x^k\}_{k=1,\ldots,n}$. In this section, we present a method developed in~\cite{li2017gradient} for estimating the score at the sample points, i.e., approximating $\textbf{G} \equiv (\nabla \log p(x^1), \ldots, \nabla \log p(x^n))^T \in \R^{n\times d}$.

This estimator is based on the well known Stein identity~\cite{stein1972bound}, which states that for any test function $\h:\R^d \rightarrow \R^{d'}$ such that $\lim_{\x \rightarrow \infty} \h(\x) p(\x) = 0$, we have
\begin{equation}\label{eq:3.2}
    \E_p[\h(\x) \nabla \log p(\x)^T + \nabla \h(\x)] = 0,
\end{equation}
where $\nabla \h(\x) \equiv (\nabla h_1(\x), \ldots, \nabla h_{d'}(\x))^T \in \R^{d' \times d}$.

By approximating the expectation in~\eqref{eq:3.2} using Monte Carlo, we obtain
\begin{equation}\label{eq:3.3}
    -\frac{1}{n} \sum_{k=1}^n \h(\x^k) \nabla \log p(\x^k)^T + \text{err} = \frac{1}{n} \sum_{k=1}^n \nabla \h(\x^k),
\end{equation}
where $\text{err}$ is a random error term with mean zero, and which vanishes as $n \rightarrow \infty$ almost surely. By denoting $\textbf{H} = (\h(\x^1), \ldots, \h(\x^n)) \in \R^{d'\times n}$ and $\overline{\nabla \h} = \frac{1}{n} \sum_{k=1}^n \nabla \h(\x^k)$, equation~\eqref{eq:3.3} reads $-\frac{1}{n}\textbf{H} \textbf{G} + \text{err} = \overline{\nabla \h}$. Hence, by using ridge regression, the Stein gradient estimator is defined as:
\begin{align}
    \hat{\textbf{G}}^{\text{Stein}} &\equiv \argmin_{\hat{\textbf{G}}} \|\overline{\nabla \h} + \frac{1}{n}\textbf{H} \hat{\textbf{G}}\|_F^2 + \frac{\eta}{n^2} \|\hat{\textbf{G}}\|_F^2 \\
    &= -(\textbf{K} + \eta \textbf{I})^{-1} \langle \nabla, \textbf{K}\rangle, \label{eq:3.4}
\end{align}
where $\textbf{K} \equiv \textbf{H}^T\textbf{H}$, $\textbf{K}_{ij} = \kappa(\x^i, \x^j) \equiv \h(x^i)^T \h(\x^j)$, $\langle \nabla, \textbf{K}\rangle = n \textbf{H}^T \overline{\nabla \h}$, $\langle \nabla, \textbf{K}\rangle_{ij} = \sum_{k=1}^n \nabla_{x_j^k} \kappa(\x^i, \x^k)$ and $\eta \geq 0$ is a regularisation parameter. One can hence use the kernel trick, and use the estimator~\eqref{eq:3.4} using any kernel $\kappa$ satisfying Stein's identity, such as the RBF kernel as shown in~\cite{liu2016kernelized}.
   
In the present work, we will exploit and extend this approach in order to obtain estimates of the score's Jacobian over the observations.

\section{Causal discovery via score matching}

In this section, we will show how to recover the causal graph from the score function $\nabla \log p(x)$ for a non-linear additive model~\eqref{eq:3.1}. We first design our proposed method in the case where the additive noise is Gaussian, and then discuss extensions to other types of noise.

\subsection{Deduce the causal graph from the score of the data distribution}

Suppose that we have access to enough observational data coming from an additive Gaussian noise model~\eqref{eq:3.1} so that we can accurately approximate the score function of the underlying data distribution. In order to extract information about the graph structure from the score function, let us write it in closed form for a model of the form~\eqref{eq:3.1}. The associated probability distribution is given by
\begin{align*}
    p(x) &= \prod_{i=1}^d p(x_i | \text{pa}_i(x)) \\
    \log p(x) &= \sum_{i=1}^d \log p(x_i | \text{pa}_i(x)) \\
    &= -\frac{1}{2}\sum_{i=1}^d \left(\frac{x_i - f_i(\text{pa}_i(x))}{\sigma_i} \right)^2 - \frac{1}{2}\sum_{i=1}^d \log(2 \pi \sigma_i^2).
\end{align*}

Thus, the score function $s(\x) \equiv \nabla \log p(\x)$ reads
\begin{equation}\label{eq:4.2}
\boxed{
    s_j(x) = -\frac{x_j - f_j(\text{pa}_j(x))}{\sigma_j^2} + \sum_{i \in \text{children}(j)} \frac{\partial f_i}{\partial x_j}(\text{pa}_i(x)) \frac{x_i - f_i(\text{pa}_i(x))}{\sigma_i^2}.
}
\end{equation}

An immediate observation from equation~\eqref{eq:4.2} is that, if $j$ is a leaf, then $s_j(x) = -\frac{x_j - f_j(\text{pa}_j(x))}{\sigma_j^2}$. Since $j \notin \pa_j(x)$, we have that $\frac{\partial s_j(x)}{\partial x_j} = -\frac{1}{\sigma_j^2}$, and hence, it holds that $\text{Var}\left(\frac{\partial s_j(x)}{\partial x_j}\right) = 0$. The following Lemma shows that this condition is also sufficient for $j$ to be a leaf, providing a way to \textit{provably identify a leaf of the graph from the knowledge of the Jacobian of the score function}.

\begin{lemma} \label{lem:1}
Let $p$ be the probability density function of a random variable $X$ defined via a non-linear additive Gaussian noise model~\eqref{eq:3.1}, and let $s(x) = \nabla \log p(x)$ be the associated score function. Then, $\forall j\in\{1,\ldots,d\}$, we have:
\begin{enumerate}[(i)]
    \item $j$ is a leaf $\Leftrightarrow$ $\forall x, \frac{\partial s_j(x)}{\partial x_j} = c$, with $c \in \mathbb{R}$ independent of $x$, i.e., $\text{Var}_X\left[\frac{\partial s_j(X)}{\partial x_j}\right] = 0$.
    \item If $j$ is a leaf, $i$ is a parent of $j$ $\Leftrightarrow$ $s_j(x)$ depends on $x_i$, i.e., $\text{Var}_X\left[\frac{\partial s_j(X)}{\partial x_i}\right] \neq 0$.
\end{enumerate}
\end{lemma}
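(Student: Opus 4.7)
The plan is to prove both parts by direct calculation starting from the closed-form score expression \eqref{eq:4.2}. For part (i), the forward direction is immediate: if $j$ is a leaf then $\text{children}(j)=\emptyset$, so \eqref{eq:4.2} reduces to $s_j(x)=-(x_j-f_j(\pa_j(x)))/\sigma_j^2$, which is affine in $x_j$ with slope $-1/\sigma_j^2$ (since $j\notin\pa_j(x)$), giving $\partial s_j/\partial x_j\equiv -1/\sigma_j^2$.

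For the backward direction of (i), I would differentiate \eqref{eq:4.2} with respect to $x_j$ explicitly, producing
\[
\frac{\partial s_j}{\partial x_j}(x)=-\frac{1}{\sigma_j^2}+\sum_{i\in\text{children}(j)}\left[\frac{\partial^2 f_i}{\partial x_j^2}(\pa_i(x))\,\frac{x_i-f_i(\pa_i(x))}{\sigma_i^2}-\left(\frac{\partial f_i}{\partial x_j}(\pa_i(x))\right)^2\frac{1}{\sigma_i^2}\right].
\]
Assuming $j$ is not a leaf, I want to show this cannot be independent of $x$. The technical subtlety is that a child $i$ of $j$ may itself appear in $\pa_{i'}$ for another child $i'$, so naively varying one $x_i$ can move several terms. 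The fix is to pick $i^\star$ to be a child of $j$ that is maximal in the topological order among $\text{children}(j)$; then $x_{i^\star}$ does not occur in $\pa_{i'}(x)$ for any other child $i'$, nor in $\pa_{i^\star}(x)$. Hence the only $x_{i^\star}$-dependence in the above sum comes from the explicit factor $x_{i^\star}$ inside the bracket for $i^\star$, giving
\[
\frac{\partial^2 s_j}{\partial x_j\,\partial x_{i^\star}}(x)=\frac{1}{\sigma_{i^\star}^2}\,\frac{\partial^2 f_{i^\star}}{\partial x_j^2}(\pa_{i^\star}(x)).
\]
If $\partial s_j/\partial x_j$ were constant in $x$, this second derivative would vanish for all $x$, forcing $f_{i^\star}$ to be linear in $x_j$ for every configuration of its other arguments, contradicting the nonlinearity assumption on $f_{i^\star}$ in its $x_j$-component. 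This yields the leaf characterization and, I expect, is the only non-routine step of the proof.

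For part (ii), I use that under the leaf hypothesis $s_j(x)=-(x_j-f_j(\pa_j(x)))/\sigma_j^2$, so $\partial s_j/\partial x_i = (1/\sigma_j^2)\,(\partial f_j/\partial x_i)(\pa_j(x))$ if $i\in\pa_j$ and is identically zero otherwise. The case $i\notin\pa_j$ gives both that $s_j$ does not depend on $x_i$ and that the variance is zero, so the implication is immediate. Conversely, if $i\in\pa_j$, then $\partial f_j/\partial x_i$ is a non-constant function of $\pa_j(x)$ by the nonlinearity of $f_j$ in the $x_i$ slot (a linear function has constant derivative in that slot, a constant function is linear, and by assumption $f_j$ fails to be linear in $x_i$ at some configuration of the remaining parents). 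Since $X$ has a positive density on $\R^d$ inherited from the noise distributions, a non-constant continuous function of $\pa_j(X)$ has strictly positive variance, completing both directions.
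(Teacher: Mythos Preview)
Your proof is correct and follows essentially the same approach as the paper: both arguments hinge on selecting a child $i^\star$ of $j$ that is maximal in the topological order among $\text{children}(j)$, then exploiting that $x_{i^\star}$ enters only one term to force $\partial^2 f_{i^\star}/\partial x_j^2\equiv 0$, contradicting nonlinearity. The only cosmetic difference is that the paper first writes $s_j(x)=cx_j+g(x_{-j})$ and differentiates that, whereas you differentiate $\partial s_j/\partial x_j$ directly; your added remark that positive density on $\R^d$ is what turns ``non-constant'' into ``positive variance'' is a nice clarification the paper leaves implicit.
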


\begin{proof}
(i) Equation~\eqref{eq:4.2} implies the "$\Rightarrow$" direction as described above.

We prove the other direction by contradiction. Suppose that $j$ is not a leaf and that $\frac{\partial s_j(x)}{\partial x_j} = c$ $\forall x$. We can thus write:
\[
s_j(x) = c x_j + g(x_{-j}),
\]
where $g(x_{-j})$ can depend on any variable but $x_j$. By plugging equation~\eqref{eq:4.2} in $s_j$, we get
\begin{align*}
\frac{f_j(\text{pa}_j(x))}{\sigma_j^2} + \sum_{i \in \text{children}(j)} \frac{\partial f_i}{\partial x_j}(\text{pa}_i(x)) \frac{x_i - f_i(\text{pa}_i(x))}{\sigma_i^2} = \left(c + \frac{1}{\sigma_j^2}\right)x_j + g(x_{-j}).
\end{align*}

Let $i_c$ be a child of node $j$ such that $\forall i \in \text{children}(j)$, $i_c \notin \text{pa}_i$. Such a node always exist since $j$ is not a leaf, and it suffices to pick a child of $j$ appearing at last in some topological order. We then have
\begin{equation} \label{eq:4.3}
\frac{\partial f_{i_c}}{\partial x_j}(\text{pa}_{i_c}(x)) \frac{x_{i_c} - f_{i_c}(\text{pa}_{i_c}(x))}{\sigma_{i_c}^2} - g(x_{-j}) = \left(c + \frac{1}{\sigma_j^2}\right)x_j - \frac{f_j(\text{pa}_j(x))}{\sigma_j^2} - \sum_{i \in \text{children}(j), i\neq i_c} \frac{\partial f_i}{\partial x_j}(\text{pa}_i(x)) \frac{x_i - f_i(\text{pa}_i(x))}{\sigma_i^2}.
\end{equation}

Now, due to the specific choice of $i_c$, we have that the RHS of~\eqref{eq:4.3} does not depend on $x_{i_c}$ (note that we are here speaking about functional dependence on variables, not statistical dependence on a random variable). Hence, we have
\begin{align*}
    &\frac{\partial}{\partial x_{i_c}} \left(\frac{\partial f_{i_c}}{\partial x_j}(\text{pa}_{i_c}(x)) \frac{x_{i_c} - f_{i_c}(\text{pa}_{i_c}(x))}{\sigma_{i_c}^2} - g(x_{-j}) \right) = 0 \\
    &\Rightarrow \frac{\partial f_{i_c}}{\partial x_j} = \sigma_{i_c}^2 \frac{\partial g(x_{-j})}{\partial x_{i_c}}.
\end{align*}

Since $g$ does not depend on $x_j$, this means that $\frac{\partial f_{i_c}}{\partial x_j}$ does not depend on $x_j$ neither, implying that $f_{i_c}$ is linear in $x_j$, contradicting the non-linearity assumption.

(ii) If $j$ is a leaf, then, by equation~\eqref{eq:4.2}, we have:
\[
s_j(x) = -\frac{x_j - f_j(\text{pa}_j(x))}{\sigma_j^2}
\]

If $i$ is not a parent of $j$, then $\frac{\partial s_j}{\partial x_i} \equiv 0$, and hence we have $\text{Var}_X\left[\frac{\partial s_j(x)}{\partial x_i}\right] = 0$. On the other hand, if $i$ is a parent of $j$, then we have $\frac{\partial s_j}{\partial x_i}(x) = \frac{1}{\sigma_j^2} \frac{\partial f_j}{\partial x_i}(\text{pa}_j(x))$. Moreover, since $f_j$ cannot be linear in $x_i$, $\frac{\partial f_j}{\partial x_i}(\text{pa}_j(x))$ cannot be a constant, and hence $\text{Var}_X\left[\frac{\partial s_j(X)}{\partial x_i}\right] \neq 0$.

\end{proof}

\textbf{Discussion:} Lemma~\ref{lem:1} shows that, for non-linear additive Gaussian noise models, leaf nodes (and only leaf nodes) have the property that the associated diagonal element in the score's Jacobian is a constant. This hence provides a way to identify a leaf of the causal graph from the knowledge of the variance of the score's Jacobian diagonal elements. By repeating this method and always removing the identified leaves, we can estimate a full topological order. This procedure is summarized in Algorithm~\ref{alg:1}.  In the following section, we present a new approach, exploiting Stein identities, to compute estimates of the score's Jacobian over a set of samples. 

\begin{algorithm} 
\caption{SCORE-matching causal order search}
\begin{algorithmic} [] \label{alg:1}
\STATE Input: Data matrix $X \in \mathbb{R}^{n \times d}$.
\STATE Initialize $\pi = []$, $\text{nodes} = \{1,\ldots,d\}$
\FOR {$k=1,\ldots,d$}
    \STATE Estimate the score function $s_{nodes} = \nabla \log p_{nodes}$ (for example using Algorithm~\ref{alg:1}).
    \STATE Estimate $V_j = \text{Var}_{X_{nodes}}\left[\frac{\partial s_j(X)}{\partial x_j}\right]$.
    \STATE $l \leftarrow \text{nodes}[\argmin_{j} V_j]$
    \STATE $\pi \leftarrow [l, \pi]$
    \STATE $\text{nodes} \leftarrow \text{nodes} - \{l\}$
    \STATE Remove $l$-th column of $X$
\ENDFOR
\STATE Get the final DAG by pruning the full DAG associated with the topological order $\pi$.
\end{algorithmic}
\end{algorithm}

\subsection{Approximation of the score's Jacobian}

The Stein gradient estimator $\hat{\textbf{G}}^{\text{Stein}}$ enables us to estimate the score function point-wise at each of our sample points. However, according to the previous section, what we need for identifying the graph is an estimate of the Jacobian of the score at all samples, in order to estimate its variance. Since we do not have a functional approximation of the score, we cannot use tricks such as auto-differentiation in order to obtain higher order derivative approximations. In this section, we extend the ideas of Stein based estimator to obtain estimates for the score's Jacobian.

For this purpose, we will use the second-order Stein identity \cite{diaconis2004use, zhu2021hessian}. Assuming that $p$ is twice differentiable, for any $q : \R^d \rightarrow \R$ such that $\lim_{\x \rightarrow \infty} q(\x) p(\x) = 0$ and such that $\E[\nabla^2 q(\x)]$ exists, the second-order Stein identity states that
\begin{equation}
    \E[q(\x) p(\x)^{-1} \nabla^2 p(\x)] = \E[\nabla^2 q(\x)],
\end{equation}
which can be rewritten as
\begin{equation}\label{eq:4.4}
\E[q(\x) \nabla^2 \log p(\x)] = \E[\nabla^2 q(\x) - q(\x) \nabla \log p(\x) \nabla \log p(\x)^T].
\end{equation}

Recall that, in order to identify a leaf of the causal graph, we are only interested in estimating the diagonal elements of the score's Jacobian at the sample points, i.e., $J \equiv (\text{diag}(\nabla^2 \log p(\x^1)), \ldots, \text{diag}(\nabla^2 \log p(\x^n)))^T \in \R^{n\times d}$. Using the diagonal part of the matrix equation~\eqref{eq:4.4} for various test functions gathered in $\h:\R^d \rightarrow \R^{d'}$, we can write

\begin{equation}\label{eq:4.5}
\E[\h(\x) \text{diag}(\nabla^2 \log p(\x))^T] = \E[\nabla^2_{\text{diag}} \h(\x) - \h(\x) \text{diag}((\nabla \log p(\x) \nabla \log p(\x)^T))],
\end{equation}

where $(\nabla^2_{\text{diag}} \h(\x))_{ij} = \frac{\partial^2 h_i(\x)}{\partial x_j^2}$. By approximating the expectations by an empirical average, we obtain, similarly as in~\eqref{eq:3.3},

\begin{equation} \label{eq:4.6}
\frac{1}{n} \sum_{k=1}^n \h(\x^k) \text{diag}(\nabla^2 \log p(\x^k))^T + \text{err} = \frac{1}{n} \sum_{k=1}^n \nabla^2_{\text{diag}} \h(\x^k) - \h(\x^k) \text{diag}(\nabla \log p(\x^k) \nabla \log p(\x^k)^T)).
\end{equation}

By denoting $\textbf{H} = (\h(\x^1), \ldots, \h(\x^n)) \in \R^{d'\times n}$ and $\overline{\nabla^2_{\text{diag}} \h} \equiv \frac{1}{n} \sum_{k=1}^n \nabla^2_{\text{diag}} \h(\x^k)$, equation~\eqref{eq:4.6} reads $\frac{1}{n} \textbf{H} \textbf{J} + \text{err} = \overline{\nabla^2_{\text{diag}} \h} - \frac{1}{n} \textbf{H} \text{diag}(\textbf{G}\textbf{G}^T)$. Hence, by using the Stein gradient estimator for $\textbf{G}$, we define the Stein Hessian estimator as the ridge regression solution of the previous equation, i.e.,

\begin{align}
    &\hat{\textbf{J}}^{\text{Stein}} \equiv \argmin_{\hat{\textbf{J}}} \left \|\frac{1}{n} \textbf{H} \hat{\textbf{J}} + \frac{1}{n} \textbf{H} \text{diag}\left(\hat{\textbf{G}}^{\text{Stein}}\left(\hat{\textbf{G}}^{\text{Stein}}\right)^T\right) - \overline{\nabla^2_{\text{diag}} \h} \right\|_F^2 \nonumber \\
    &\qquad \qquad + \frac{\eta}{n^2} \|\hat{\textbf{J}}\|_F^2 \nonumber \\
    &= -\text{diag}\left(\hat{\textbf{G}}^{\text{Stein}}\left(\hat{\textbf{G}}^{\text{Stein}}\right)^T\right) + (\textbf{K} + \eta \textbf{I})^{-1} \langle \nabla^2_{\text{diag}}, \textbf{K}\rangle, \label{eq:4.7}
\end{align}

where $\textbf{K}_{ij} = \kappa(\x^i, \x^j) \equiv \h(\x^i)^T \h(\x^j)$, $\langle \nabla^2_{\text{diag}}, \textbf{K}\rangle = n \textbf{H}^T \overline{\nabla^2_{\text{diag}} \h}$, $\langle \nabla^2_{\text{diag}}, \textbf{K}\rangle_{ij} = \sum_{i=1}^n \frac{\partial^2 \kappa(\x^i, \x^k)}{\partial (\x^k_j)^2}$ and $\textbf{G}^{\text{Stein}}$ is defined in~\eqref{eq:3.4}. The regularisation parameter $\eta$ lifts the eigenvalues of the same matrix $\textbf{K}$ as in the Stein gradient estimator $\hat{\textbf{G}}^{\text{Stein}}$. We hence decide to use the same parameter for both ridge regression problems.

\paragraph{Choice of kernel} Estimating the score's Jacobian with the method above requires a choice of kernel $\kappa$. A widely used kernel is the RBF kernel $\kappa_s(x,y) = e^{-\frac{\|x-y\|_2^2}{2s^2}}$, which has one parameter $s$ called the bandwidth. This parameter can be estimated from the data to be fitted, using the commonly used median heuristic, i.e., choosing $s$ to be the median of the pairwise distances between vectors in $X$. This estimation procedure even enjoys theoretical convergence properties~\cite{garreau2017large}. Note that, when using Algorithm~\ref{alg:2} for causal discovery in Algorithm~\ref{alg:1}, the kernel lengthscale is re-computed each time a node is removed from the data matrix $X$.

\begin{algorithm} 
\caption{Estimating the Jacobian of the score}
\begin{algorithmic} [Estimating the jacobian of the score] \label{alg:2}
\STATE Input: Data matrix $X \in \mathbb{R}^{n \times d}$, regularisation parameter $\eta > 0$.
\STATE $s \leftarrow \text{median}(\{\|x_i - x_j\|_2: i, j=1,\ldots,n, x_k = X[k,:]\})$.
\STATE Compute $\hat{\textbf{J}}^{\text{Stein}}$ using RBF kernel $\kappa_s$, regularisation parameter $\eta$ and data matrix $X$ based on~\eqref{eq:4.7}.
\end{algorithmic}
\end{algorithm}

\paragraph{Algorithm complexity} Estimating the topological order requires inverting $d$ times an $n\times n$ kernel matrix, hence the complexity is $\mathcal{O}(dn^3)$ (and could be improved using, e.g., Strassen's algorithm~\cite{strassen1969gaussian}). Including the pruning step, the final complexity is hence $\mathcal{O}(dn^3 + d r(n,d))$ where $r(n,d)$ is the complexity of fitting a generalized additive model using $n$ data points in $d$ dimensions. In comparison, the complexity of CAM is $\mathcal{O}(d^2 r(n,d))$. The total computational complexity of GraNDAG is not discussed in~\cite{lachapelle2019gradient}; it is difficult to specify it since it depends on the number of iterations used in the Augmented Lagrangian method, which may depend on the dimension and number of samples. However, GraNDAG is particularly slow due the computation of the acyclicity constraint at each iteration, which requires computing the exponential of a $d\times d$ matrix, taking $\mathcal{O}(d^3)$ operations.

In practice, in our method,  the time for estimating the topological order is much smaller than the time for pruning it ($30\%$ of the total time for $(d,n) = (20, 1000)$ and $5\%$ of the total time for $(d,n) = (50, 1000)$). In comparison, CAM spends most of the time estimating the topological order (more than $95\%$ of the total time in all tested scenarios). Hence, we expect the dominant term in our method's time complexity to be $d r(n,d)$, thus improving upon CAM's complexity. Moreover, in the case where $n$ becomes very large, it is possible to use kernel approximation methods to reduce the time complexity of our method~\cite{si2014memory}.

\subsection{Extension to non-Gaussian additive noise models}

In the previous section, we exploited the structure of the additive Gaussian noise model to deduce the causal graph from the score function ~\eqref{eq:3.1}. Actually, the main ingredient required in our analysis is the additive structure. Indeed, for any additive noise model (including non-Gaussian noise), the score function has a similar structure as in~\eqref{eq:4.2}. 

\begin{lemma}\label{lem:2}
Suppose that the random variable $X$ is generated from~\eqref{eq:3.1} where the noise variables $\epsilon_i$ are i.i.d. with smooth probability distribution function $p^\epsilon$. Then, the score function of $X$ can be written as follows:
\begin{equation}
    s_j(\x) = \frac{d \log p^\epsilon}{d x} (x_j - f_j(\text{pa}_j(\x)))
    - \sum_{i \in \text{children}(j)} \frac{\partial f_i}{\partial x_j}(\text{pa}_i(\x)) \frac{d \log p^\epsilon}{d x} (x_i - f_i(\text{pa}_i(\x))).
\end{equation}
\end{lemma}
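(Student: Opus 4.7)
The plan is to mimic the closed-form derivation of equation~\eqref{eq:4.2} that was carried out for the Gaussian case, but now with a generic smooth noise density $p^\epsilon$.

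The starting point is the causal Markov factorisation entailed by the joint independence of the noises $\epsilon_i$ together with the DAG structure,
\[
p(\x) = \prod_{i=1}^d p(x_i \mid \pa_i(\x)).
\]
The additive noise assumption $X_i = f_i(\pa_i(X)) + \epsilon_i$ with $\epsilon_i$ independent of $\pa_i(X)$ and distributed as $p^\epsilon$ then lets me replace each conditional by a shift of $p^\epsilon$, giving
\[
p(\x) = \prod_{i=1}^d p^\epsilon\bigl(x_i - f_i(\pa_i(\x))\bigr),
\qquad
\log p(\x) = \sum_{i=1}^d \log p^\epsilon\bigl(x_i - f_i(\pa_i(\x))\bigr).
\]

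Next I would compute $s_j(\x) = \partial \log p(\x)/\partial x_j$ by the chain rule. The bookkeeping is the crucial step: the variable $x_j$ appears inside the $i$-th summand only when either $i=j$ (because of the explicit $x_j$ in $x_i - f_i(\pa_i(\x))$, and recall that $j \notin \pa_j$ since the graph is acyclic, so $f_j(\pa_j(\x))$ does not contribute) or when $j \in \pa_i(\x)$, i.e.\ when $i$ is a child of $j$. For every other index the summand is constant in $x_j$ and contributes nothing. Differentiating, the $i=j$ term yields $\tfrac{d\log p^\epsilon}{dx}(x_j - f_j(\pa_j(\x)))$, while each child $i$ contributes $-\tfrac{\partial f_i}{\partial x_j}(\pa_i(\x))\,\tfrac{d\log p^\epsilon}{dx}(x_i - f_i(\pa_i(\x)))$. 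Summing gives exactly the claimed formula.

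There is no real obstacle: the only thing to be careful about is the identification of which summands depend functionally on $x_j$, which is controlled entirely by the DAG (no self-loops, and the children of $j$ are by definition the nodes whose parent set contains $j$). As a sanity check, specialising $p^\epsilon$ to $\mathcal{N}(0,\sigma_i^2)$ gives $\tfrac{d\log p^\epsilon}{dx}(u) = -u/\sigma_i^2$, and the expression collapses to~\eqref{eq:4.2}, confirming the sign conventions.
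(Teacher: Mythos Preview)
Your proposal is correct and follows exactly the approach the paper itself uses: factor $p(\x)$ via the causal Markov property, use the additive-noise form to write each conditional as $p^\epsilon(x_i - f_i(\pa_i(\x)))$, take logs, and differentiate termwise while tracking which summands involve $x_j$. The paper carries out precisely this computation in the Gaussian case leading to~\eqref{eq:4.2} and states Lemma~\ref{lem:2} as its direct generalisation without a separate proof, so your write-up is essentially the omitted argument.
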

The decomposition of the score's components $j$ into a common term $\frac{d \log p^\epsilon}{d x} (x_j - f_j(\text{pa}_j(\x)))$ and a term involving only the parents of the node $j$ is hence characteristic of general additive noise models. Recall that our method identifies leaves by identifying non-linearity in the components of the score. When the common term is linear in $x_j$, as it is the case with Gaussian noise, the second term is the only one carrying non-linearities, and the leaves can hence be perfectly identified with this method (see Lemma~\ref{lem:1}). However, intuitively speaking, even when the noise is non-Gaussian, i.e., when the common term carries non-linearities, the second term still carries non-linearities proportionally to the number of parents of node $j$. Hence, we may expect that the proposed algorithm can work in the case of general additive models, even when the noise is non-Gaussian. While this does not provide a formal identifiability statement, we will show in the experimental section that our proposed method outperforms other state-of-the-art algorithms on non-Gaussian additive models.



\section{Numerical experiments}

We now apply Algorithm~\ref{alg:1} with Algorithm~\ref{alg:2} as score estimator to synthetic and real-world datasets and compare its performance to state-of-the-art methods, such as CAM~\cite{buhlmann2014cam} and GraNDAG~\cite{lachapelle2019gradient}. The other methods (NOTEARS, PC, FCI, GES,...) are omitted since they perform much worse ~\cite{buhlmann2014cam, lachapelle2019gradient}. Recent work \cite{reisach2021beware} warned about the fact that simulated data sometimes lead to scenari where a topological order can simply be estimated by sorting the nodes variances. In order to defend ourselves against this, we randomly generate the noise variances in the generative model, and show that the estimated order when sorting the variance is much worse than the one estimated by Algorithm~\ref{alg:1}.

\subsection{Synthetic data}

We test our algorithm on synthetic data generated from a non-linear additive Gaussian noise model~\eqref{eq:3.1}. Mimicking~\cite{lachapelle2019gradient, zhu2019causal}, we generate the link functions $f_i$ by sampling Gaussian processes with a unit bandwidth RBF kernel. The noise variances $\sigma_i^2$ are independently sampled uniformly in $[0.4, 0.8]$. The causal graph is generated using the Erd\"os-R\'enyi model~\cite{ErdosRenyi}. For a fixed number of nodes $d$, we vary the sparsity of the sampled graph by setting the average number of edges to be either $d$ (ER1) or $4d$ (ER4). Moreover, to test the robustness of the algorithm against noise type misspecification, we also generate data with Laplace noise instead of Gaussian noise. Additional experiments, using Gumbel noise and scale free graphs~\cite{barabasi1999emergence} can be found in Appendix~\ref{app:1}.

For each method, we compute the structural Hamming distance (SHD) between the output and the true causal graph, which counts the number of missing, falsely detected or reversed edges, as well as the structural intervention distance (SID) \cite{peters2015structural} which counts the number of interventional distribution which would be miscalculated using the chosen causal graph. 

For all order-based causal discovery methods, we always apply the same pruning procedure, i.e., CAM with the same cutoff parameter of $0.001$. Moreover, we compute a quantity measuring how well the topological order is estimated. For an ordering $\pi$, and a target adjacency matrix $A$, we define the topological order divergence $D_{top}(\pi, A)$ as
\[
D_{top}(\pi, A) = \sum_{i=1}^d \sum_{j:\pi_i > \pi_j} A_{ij}.
\]
If $\pi$ is a correct topological order for $A$, then $D_{top}(\pi, A) = 0$. Otherwise, $D_{top}(\pi, A)$ counts the number of edges that cannot be recovered due to the choice of topological order. It hence provides a lower bound on the SHD of the final algorithm (irrespective of the pruning method).

The results of the synthetic experiments are shown in Tables~\ref{tab:1} to~\ref{tab:6}. The computed quantities are averages over $10$ independent runs. We can see that, for sparser graphs (ER1), our method performs similarly as the best method CAM. However, for denser graphs (ER4), our method performs better, and in particular seems to estimate a better topological order, since the $D_{top}$ value is smaller. For $50$ nodes graphs, the two best methods are CAM and ours, which both perform similarly. Note that, in order to run it within a reasonable time frame, we had to restrict the maximum number of neighbours, hence providing a sparsity prior to the algorithm, which fits the correct graph in this situation, since sparse Erd\"os-Renyi graphs usually do not contain high degree nodes. Since we restricted the number of neighbours in the graph, the order finding part of CAM does not yield a single topological order, hence we could not compute $D_{top}$ in this setting. We also observe that the topological ordering resulting from sorting the variances (VarSort) is much worse in general than with all other methods, showing that finding a topological order for the generated datasets is not a trivial task. Finally, we observe that our method is quite robust to noise misspecification, since the accuracy remains very similar for Laplace noise.

In terms of running time (Table~\ref{tab:time}), we see that our method is significantly faster. Actually, most of the time ($95\%$ for $d=50$) is spent on pruning the final DAG using CAM.

\begin{table*}[h!]
\centering
\caption{Synthetic experiment for $d=10$ with Gaussian noise}
    \begin{tabular}{c|cccccc} 
    \hline
    & ER1 & & & ER4 & & \\
    \hline
    & SHD & SID & $D_{top}(\pi, A)$ & SHD & SID & $D_{top}(\pi, A)$ \\
    \hline
        SCORE (ours) & $\bf{1.1 \pm 0.9}$ & $\bf{4.5 \pm 5.3}$ & $\bf{0.4 \pm 0.6}$ & $\bf{19.5 \pm 2.9}$ & $\bf{35.0 \pm 9.1}$ & $\bf{0.3 \pm 0.3}$   \\
        CAM & $1.7 \pm 1.0$ & $6.4 \pm 4.2$ & $\bf{0.4 \pm 0.5}$ & $24.4 \pm 3.1$ & $45.2 \pm 10.2$ & $4.4 \pm 3.2$ \\
        GraN-DAG & $1.5 \pm 1.4$ & $6.5 \pm 7.2$ & $-$ & $22.2 \pm 2.6$ & $42.0 \pm 6.2$ & $-$ \\
        VarSort & $-$ & $-$ & $1.9 \pm 1.1$ & $-$ & $-$ & $9.7 \pm 3.1$ \\
    \hline
    \end{tabular}
    \label{tab:1}
\end{table*}

\begin{table*}[h!]
    \centering
    \caption{Synthetic experiment for $d=20$ with Gaussian noise}
    \begin{tabular}{c|cccccc} 
    \hline
    & ER1 & & & ER4 & & \\
    \hline
    & SHD & SID & $D_{top}(\pi, A)$ & SHD & SID & $D_{top}(\pi, A)$ \\
    \hline
        SCORE (ours) & $\bf{2.6 \pm 1.9}$ & $\bf{9.9 \pm 8.5}$ & $1.2 \pm 1.7$ & $\bf{47.5 \pm 4.5}$ & $\bf{177.5 \pm 11.6}$ & $\bf{3.1 \pm 1.5}$   \\
        CAM & $3.5 \pm 1.6$ & $14.3 \pm 9.8$ & $\bf{0.8 \pm 1.0}$ & $54.2 \pm 5.4$ & $201.9 \pm 29.0$ & $13.6 \pm 6.9$ \\
        GraN-DAG & $7.6 \pm 4.2$ & $31.6 \pm 22.7$ & $-$ & $49.3 \pm 4.5$ & $211.4 \pm 36.6$ & $-$ \\
        VarSort & $-$ & $-$ & $3.7 \pm 1.6$ & $-$ & $-$ & $18.3 \pm 6.7$ \\
    \hline
    \end{tabular}
    
    \label{tab:2}
\end{table*}

\begin{table*}[h!]
    \centering
    \caption{Synthetic experiment for $d=50$ with Gaussian noise}
    \begin{tabular}{c|cccccc} 
    \hline
    & ER1 & & & ER4 & & \\
    \hline
    & SHD & SID & $D_{top}(\pi, A)$ & SHD & SID & $D_{top}(\pi, A)$ \\
    \hline
        SCORE (ours) & $10.4 \pm 3.9$ & $\bf{50.9 \pm 32.9}$ & $3.9 \pm 2.4$ & $\bf{131.5 \pm 7.5}$ & $\bf{1262 \pm 110}$ & $16.3 \pm 6.1$   \\
        CAM & $\bf{8.3 \pm 2.9}$ & $53.7 \pm 31.9$ & $-$ & $140.8 \pm 5.5$ & $1337 \pm 94$ & $-$  \\
        GraN-DAG & $20.2 \pm 6.1$ & $135.3 \pm 45.9$ & $-$ & $140.8 \pm 9.5$ & $1432 \pm 110$ & $-$ \\
        VarSort & $-$ & $-$ & $8.8 \pm 3.0$ & $-$ & $-$ & $43.3 \pm 9.7$ \\
    \hline
    \end{tabular}
    \label{tab:3}
\end{table*}

\begin{table*}[h!]
    \centering
    \caption{Synthetic experiment for $d=10$ with Laplace noise}
    \begin{tabular}{c|cccccc} 
    \hline
    & ER1 & & & ER4 & & \\
    \hline
    & SHD & SID & $D_{top}(\pi, A)$ & SHD & SID & $D_{top}(\pi, A)$ \\
    \hline
        SCORE (ours) & $1.4 \pm 0.8$ & $4.5 \pm 4.7$ & $0.8 \pm 0.7$ & $\bf{19.6 \pm 2.5}$ & $\bf{31.9 \pm 7.9}$ & $\bf{0.2 \pm 0.4}$   \\
        CAM & $1.5 \pm 1.3$ & $6.1 \pm 6.5$ & $\bf{0.5 \pm 0.5}$ & $24.4 \pm 1.5$ & $44.4 \pm 8.1$ & $1.5 \pm 1.6$ \\
        GraN-DAG & $\bf{1.3 \pm 1.4}$ & $\bf{4.4 \pm 4.9}$ & $-$ & $20.3 \pm 2.7$ & $39.3 \pm 13.0$ & $-$ \\
        VarSort & $-$ & $-$ & $1.6 \pm 1.3$ & $-$ & $-$ & $7.2 \pm 2.3$ \\
    \hline
    \end{tabular}
    \label{tab:4}
\end{table*}

\begin{table*}[h!]
    \centering
    \caption{Synthetic experiment for $d=20$ with Laplace noise}
    \begin{tabular}{c|cccccc} 
    \hline
    & ER1 & & & ER4 & & \\
    \hline
    & SHD & SID & $D_{top}(\pi, A)$ & SHD & SID & $D_{top}(\pi, A)$ \\
    \hline
        SCORE (ours) & $\bf{1.6 \pm 1.2}$ & $\bf{6.8 \pm 11.4}$ & $0.5 \pm 0.9$ & $\bf{48.0 \pm 4.0}$ & $\bf{199.8 \pm 21.4}$ & $\bf{4.9 \pm 1.8}$   \\
        CAM & $2.3 \pm 1.4$ & $10.0 \pm 7.0$ & $\bf{0.3 \pm 0.5}$ & $52.4 \pm 3.9$ & $208.7 \pm 17.5$ & $11.6 \pm 7.9$ \\
        GraN-DAG & $4.9 \pm 2.1$ & $27.5 \pm 13.2$ & $-$ & $48.2 \pm 3.8$ & $198.3 \pm 42.8$ & $-$ \\
        VarSort & $-$ & $-$ & $3.4 \pm 2.0$ & $-$ & $-$ & $20.8 \pm 4.5$ \\
    \hline
    \end{tabular}
    
    \label{tab:5}
\end{table*}

\begin{table*}[h!]
    \centering
    \caption{Synthetic experiment for $d=50$ with Laplace noise}
    \begin{tabular}{c|cccccc} 
    \hline
    & ER1 & & & ER4 & & \\
    \hline
    & SHD & SID & $D_{top}(\pi, A)$ & SHD & SID & $D_{top}(\pi, A)$ \\
    \hline
        SCORE (ours) & $11.0 \pm 4.5$ & $71.8 \pm 50.2$ & $4.0 \pm 2.5$ & $\bf{128.1 \pm 7.9}$ & $1384 \pm 131$ & $19.8 \pm 3.5$   \\
        CAM & $\bf{10.1 \pm 3.4}$ & $\bf{66.1 \pm 47.9}$ & $-$ & $134.6 \pm 7.2$ & $\bf{1361 \pm 136}$ & $-$ \\
        GraN-DAG & $21.9 \pm 3.9$ & $165.7 \pm 46.2$ & $-$ & $138.3 \pm 8.8$ & $1603 \pm 166$ & $-$ \\
        VarSort & $-$ & $-$ & $8.1 \pm 4.2$ & $-$ & $-$ & $47.3 \pm 8.7$ \\
    \hline
    \end{tabular}
    
    \label{tab:6}
\end{table*}


\begin{table}[h!]
    \centering
    \caption{Run time (in seconds) comparison of the algorithms on ER1. The first row corresponds to the time spent for finding the topological order in our method. $(*)$ In order to run CAM on $50$ nodes within a reasonable time, we had to use preliminary neighbour search while restricting the maximum number of neighbours to  $20$ ~\cite{buhlmann2014cam}.}
    \begin{tabular}{c|ccc} 
    \hline
    & $d=10$ & $d=20$ & $d=50$  \\
    \hline
        SCORE order & $3.3 \pm 0.1$ & $8.5 \pm 0.8$ & $31 \pm 2.9$  \\
        SCORE & $6.3 \pm 0.2$ & $32.7 \pm 6.7$ & $257 \pm 17$  \\
        CAM & $30.1 \pm 3.7$ & $313 \pm 80$ & $1143 \pm 79^{(*)}$  \\
        GraN-DAG & $185 \pm 26$ & $357 \pm 47$ & $1410 \pm 73$ \\
    \hline
    \end{tabular}
    \label{tab:time}
\end{table}

\subsection{Real data}

We now compare the algorithms on a popular real-world dataset for causal discovery \cite{sachs2005causal} ($11$ nodes, $17$ edges and $853$ observations), as well as the pseudo-real dataset sampled from SynTReN generator \cite{van2006syntren} (Table~\ref{tab:7}). We can see that on Sachs, our method matches the SHD of CAM while improving the SID. On the SynTReN datasets, GraN-DAG seems to perform best, but the confidence intervals highly overlap.

\begin{table}[h!]
    \centering
    \caption{Comparison of several algorithms on the real world dataset Sachs and $10$ datasets sampled from SynTReN.}
    \begin{tabular}{c|cc|cc} 
    \hline
    & Sachs & & SynTReN & \\
    & SHD & SID & SHD & SID \\
    \hline
        SCORE & $12$ & $45$ & $36.2 \pm 4.7$ & $193.4 \pm 60.2$ \\
        CAM & $12$ & $55$ & $40.5 \pm 6.8$ & $152.3 \pm 48.0$ \\
        GraN-DAG & $13$ & $47$ & $34.0 \pm 8.5$ & $161.7 \pm 53.4$ \\
    \hline
    \end{tabular}
    
    \label{tab:7}
\end{table}
\vspace{-2mm}

\section{Related work}

\paragraph{Causal discovery for non-linear additive models} Many algorithms have been proposed in the past few years for the specific problem studied in this work. GraN-DAG~\cite{lachapelle2019gradient} aims to maximise the likelihood of the observed data under this model, and uses a continous contraint for the acyclicity of the causal graph, proposed in~\cite{zheng2018dags}, in order to use a continuous optimization method to find a first order stationary point of the problem. CAM~\cite{buhlmann2014cam} further assumes that the link functions $f_i$ in~\eqref{eq:3.1} also have an additive structure. They first estimate a topological order by greedily maximizing the data likelihood, and then prune the DAG using sparse regression techniques.


In the scope of linear additive models, \cite{ghoshal2018learning} first proposed an approach to provably recover, under some hypothesis on the noise variances, the causal graph in polynomial time and sample complexity. Their approach can be seen as an order-based method, where the ordering is estimated by sequentially identifying leaves based on an estimation of the precision matrix. In spirit, their method is closely related to ours. For instance, if the link functions $f_i$ in~\eqref{eq:3.1} are all linear, then the score of the joint distribution of $X$ is given by $s(x) = -\Theta x$, where $\Theta$ is the precision matrix. Hence, the score's Jacobian, which is used in our algorithm to identify the causal graph, can be seen as a non-linear generalization of the precision matrix, which has 
shown success for identifying causal relations in linear settings~\cite{loh2014high}.

While our work focuses on the identifiable non-linear additive Gaussian noise model, other works target more general non-parametric model, but must then rely on different kinds of assumptions such as faithfulness, restricted faithfulness or sparsest Markov representation~\cite{spirtes2000causation, raskutti2018learning, solus2021consistency}. These works apply conditional independence tests, and learn a graph that matches the identified  conditional independence relations~\cite{spirtes2000causation, zhang2008completeness}.

\paragraph{Score estimation} In the scope of generative modelling~\cite{song2019generative}, the score function is learned by fitting a neural network minimizing the empirical Fisher divergence~\cite{hyvarinen2005estimation}. While performing well in practice, such method is quite computationally expensive and requires tuning of several training parameters.


For our purpose, we chose to instead minimize the kernelized Stein discrepancy, since this approach provides a close form solution, allowing fast estimation at all observations. In practice, such method performs similarly as score matching while being much faster to compute. Asymptotic consistency of the Stein gradient estimator, and its relation to score matching were analyzed in~\cite{barp2019minimum}.



\section{Conclusion}

In this work, we demonstrated a new connection between score matching and causal discovery methods. We found that, in the case of non-linear additive Gaussian noise model, the causal graph can easily be recovered from the score function. In addition to generative models, this provides a new promising application for score estimation techniques. The proposed technique includes two modules: one that evaluates the (Jacobian of the) score, and one that prunes the final DAG given a topological order. Note that any score matching or pruning method can be plugged in to obtain a new practical algorithm.

\paragraph{Future work} One focus of this work was to build a fast algorithm for estimating a topological order, while avoiding the combinatorial complexity of searching over permutations, and the use of any heuristic optimization approaches. For this reason, we avoided using popular score matching algorithms developed for score-based generative modelling in high-dimensions~\cite{song2020improved}, since re-training a neural network after each leaf removal would be quite expensive in practice. Amortization~\cite{lowe2020amortized} is a promising direction to alleviate this issue.

In addition, we would like to further study the application of score matching causal discovery methods to generative model other than additive (Gaussian) noise. Due to the one-to-one correspondence between the score of a distribution and its density function, it should be possible to recover the graph from the score for any identifiable model. The question is hence: How to read the graph from the score function for a given model, and is there a universal way to do it that encapsulates a large class of models?

\section*{Acknowledgments}
This work was supported by the Swiss National Science Foundation (SNSF) under  grant number $407540\_167319$.

\bibliographystyle{unsrt}  
\bibliography{references}  

\appendix
\onecolumn

\section{Additional experiments}
\label{app:1}

We show here additional synthetic experiments. Tables~\ref{tab:8},~\ref{tab:9} and~\ref{tab:10} show the result for additive noise model with Gumbel noise on Erd\"os-Renyi graphs. Tables~\ref{tab:11},~\ref{tab:12} and~\ref{tab:12} show the results for Gaussian noise with Scale-free graphs.

\begin{table*}[h!]
    \centering
    \caption{Synthetic experiment for $d=10$ with Gumbel noise}
    \begin{tabular}{c|cccccc} 
    \hline
    & ER1 & & & ER4 & & \\
    \hline
    & SHD & SID & $D_{top}(\pi, A)$ & SHD & SID & $D_{top}(\pi, A)$ \\
    \hline
        SCORE (ours) & $\bf{1.1 \pm 1.2}$ & $\bf{4.5 \pm 5.0}$ & $\bf{0.4 \pm 0.5}$ & $\bf{21.7 \pm 2.9}$ & $\bf{35.3 \pm 7.4}$ & $\bf{0.3 \pm 0.4}$   \\
        CAM & $2.0 \pm 1.5$ & $6.1 \pm 5.8$ & $1.6 \pm 0.8$ & $27.2 \pm 1.8$ & $48.9 \pm 9.0$ & $3.8 \pm 2.5$ \\
        GraN-DAG & $2.1 \pm 1.9$ & $9.7 \pm 10.4$ & & $22.9 \pm 3.2$ & $43.2 \pm 11.7$ & \\
        VarSort &  &  & $1.9 \pm 0.8$ &  &  & $8.2 \pm 3.0$ \\
    \hline
    \end{tabular}
    \label{tab:8}
\end{table*}

\begin{table*}[h!]
    \centering
    \caption{Synthetic experiment for $d=20$ with Gumbel noise}
    \begin{tabular}{c|cccccc} 
    \hline
    & ER1 & & & ER4 & & \\
    \hline
    & SHD & SID & $D_{top}(\pi, A)$ & SHD & SID & $D_{top}(\pi, A)$ \\
    \hline
        SCORE (ours) & $\bf{3.3 \pm 2.6}$ & $\bf{12.0 \pm 11.5}$ & $\bf{0.7 \pm 0.9}$ & $\bf{52.9 \pm 4.4}$ & $\bf{205.5 \pm 35.5}$ & $\bf{5.1 \pm 1.6}$   \\
        CAM & $5.8 \pm 1.5$ & $24.6 \pm 13.0$ & $3.0 \pm 2.0$ & $57.1 \pm 4.2$ & $230.0 \pm 39.3$ & $10.7 \pm 5.8$ \\
        GraN-DAG & $7.4 \pm 2.5$ & $29.2 \pm 11.3$ & & $54.9 \pm 4.3$ & $239.5 \pm 43.6$ & \\
        VarSort &  &  & $3.8 \pm 1.7$ &  &  & $20.8 \pm 6.6$ \\
    \hline
    \end{tabular}
    \label{tab:9}
\end{table*}

\begin{table*}[h!]
    \centering
    \caption{Synthetic experiment for $d=50$ with Gumbel noise}
    \begin{tabular}{c|cccccc} 
    \hline
    & ER1 & & & ER4 & & \\
    \hline
    & SHD & SID & $D_{top}(\pi, A)$ & SHD & SID & $D_{top}(\pi, A)$ \\
    \hline
        SCORE (ours) & $\bf{11.3 \pm 4.6}$ & $\bf{68.2 \pm 45.1}$ & $4.1 \pm 2.5$ & $\bf{132.6 \pm 8.0}$ & $1390 \pm 132$ & $\bf{19.7 \pm 3.4}$   \\
        CAM & $\bf{11.0 \pm 3.7}$ & $69.7 \pm 48.8$ & $-$ & $141.1 \pm 6.7$ & $\bf{1350 \pm 137}$ & $-$ \\
        GraN-DAG & $22.5 \pm 4.2$ & $167.1 \pm 47.3$ & $-$ & $139.9 \pm 7.0$ & $1552 \pm 143$ & $-$ \\
        VarSort & $-$ & $-$ & $8.8 \pm 1.6$ & $-$ & $-$ & $45.5 \pm 8.0$ \\
    \hline
    \end{tabular}
    \label{tab:10}
\end{table*}

\begin{table*}[h!]
    \centering
    \caption{Synthetic experiment for $d=10$ with Gaussian noise on scale free graphs}
    \begin{tabular}{c|cccccc} 
    \hline
    & SF1 & & & SF4 & & \\
    \hline
    & SHD & SID & $D_{top}(\pi, A)$ & SHD & SID & $D_{top}(\pi, A)$ \\
    \hline
        SCORE (ours) & $\bf{0.3 \pm 0.6}$ & $\bf{2.7 \pm 5.8}$ & $\bf{0.1 \pm 0.3}$ & $\bf{4.6 \pm 1.7}$ & $\bf{21.5 \pm 9.6}$ & $\bf{0.5 \pm 0.9}$   \\
        CAM & $0.4 \pm 0.5$ & $2.8 \pm 3.6$ & $0.3 \pm 0.3$ & $9.6 \pm 2.0$ & $40.4 \pm 11.4$ & $4.1 \pm 1.6$ \\
        GraN-DAG & $1.4 \pm 1.0$ & $12.5 \pm 9.7$ & $-$ & $4.7 \pm 1.8$ & $23.0 \pm 7.3$ & $-$ \\
        VarSort & $-$ & $-$ & $2.8 \pm 1.7$ & $-$ & $-$ & $7.0 \pm 3.2$ \\
    \hline
    \end{tabular}
    \label{tab:11}
\end{table*}

\begin{table*}[h!]
    \centering
    \caption{Synthetic experiment for $d=20$ with Gaussian noise on scale free graphs}
    \begin{tabular}{c|cccccc} 
    \hline
    & SF1 & & & SF4 & & \\
    \hline
    & SHD & SID & $D_{top}(\pi, A)$ & SHD & SID & $D_{top}(\pi, A)$ \\
    \hline
        SCORE (ours) & $\bf{0.9 \pm 0.9}$ & $13.8 \pm 12.6$ & $0.7 \pm 0.6$ & $17.5 \pm 3.5$ & $179.2 \pm 23.8$ & $\bf{3.6 \pm 1.4}$   \\
        CAM & $\bf{0.9 \pm 0.9}$ & $\bf{12.9 \pm 14.0}$ & $\bf{0.5 \pm 0.4}$ & $26.4 \pm 3.9$ & $253.7 \pm 28.8$ & $4.6 \pm 3.2$ \\
        GraN-DAG & $3.2 \pm 1.9$ & $25.5 \pm 15.6$ & $-$ & $\bf{14.7 \pm 4.0}$ & $\bf{168.0 \pm 39.2}$ & $-$ \\
        VarSort & $-$ & $-$ & $7.4 \pm 2.5$ & $-$ & $-$ & $20.2 \pm 7.2$ \\
    \hline
    \end{tabular}
    \label{tab:12}
\end{table*}

\begin{table*}[h!]
    \centering
    \caption{Synthetic experiment for $d=50$ with Gaussian noise on scale free graphs}
    \begin{tabular}{c|cccccc} 
    \hline
    & SF1 & & & SF4 & & \\
    \hline
    & SHD & SID & $D_{top}(\pi, A)$ & SHD & SID & $D_{top}(\pi, A)$ \\
    \hline
        SCORE (ours) & $4.6 \pm 2.4$ & $132.6 \pm 75.8$ & $4.0 \pm 1.0$ & $68.3 \pm 3.6$ & $1724 \pm 109$ & $21.8 \pm 5.0$   \\
        CAM & $\bf{3.6 \pm 1.9}$ & $\bf{115.4 \pm 72.6}$ & $-$ & $85.3 \pm 4.2$ & $1935 \pm 99$ & $-$ \\
        GraN-DAG & $9.2 \pm 3.3$ & $281.8 \pm 129.8$ & $-$ & $\bf{63.8 \pm 9.7}$ & $\bf{1677 \pm 118}$ & $-$ \\
        VarSort & $-$ & $-$ & $21.0 \pm 4.0$ & $-$ & $-$ & $73.0 \pm 10.6$ \\
    \hline
    \end{tabular}
    \label{tab:13}
\end{table*}

\end{document}